\definecolor{wacvblue}{rgb}{0.21,0.49,0.74}
\definecolor{bittersweet}{rgb}{1.0, 0.44, 0.37}
\definecolor{mypurple}{rgb}{0.5, 0, 0.5}
\newcommand{\red}[1]{{\color{red}#1}}
\newcommand{\orange}[1]{{\color{bittersweet}#1}}
\newcommand{\blue}[1]{{\color{cyan}#1}}
\newcommand{\purple}[1]{{\color{mypurple}#1}}
\newcommand{\xhdr}[1]{\vspace{5pt}\noindent\textbf{#1}}
\newcommand{\cmark}{\ding{51}}%
\newtheorem{theorem}{Theorem}[section]
\newenvironment{application}[1][Application]{%
  \par\pushQED{\qed}\normalfont
  \topsep6pt plus6pt minus3pt
  \trivlist
  \item[\hskip\labelsep\bfseries #1.]%
}{\popQED\endtrivlist\par}
\author{
Ayush Shrivastava\textsuperscript{1,2$^{*}$}\qquad
Connelly Barnes\textsuperscript{2}\qquad
Xuaner Zhang\textsuperscript{2}\qquad
Lingzhi Zhang\textsuperscript{2}\vspace{1.2mm}\\
Andrew Owens\textsuperscript{1}\qquad
Sohrab Amirghodsi\textsuperscript{2}\qquad
Eli Shechtman\textsuperscript{2}
\vspace{2mm}\\
\textsuperscript{1}University of Michigan \qquad \textsuperscript{2}Adobe Research
\vspace{1mm}\\
{\normalsize \texttt{\url{\projecturl}}}
}
\begin{document}

\title{Fine-grained Defocus Blur Control for Generative Image Models}

\maketitle

{\let\thefootnote\relax\footnotetext{{* Work done during an internship at Adobe.}}}

\begin{abstract}
Current text-to-image diffusion models excel at generating diverse, high-quality images, yet they struggle to incorporate fine-grained camera metadata such as precise aperture settings. In this work, we introduce a novel text-to-image diffusion framework that leverages camera metadata, or EXIF data, which is often embedded in image files, with an emphasis on generating controllable lens blur. Our method mimics the physical image formation process by first generating an all-in-focus image, estimating its monocular depth, predicting a plausible focus distance with a novel focus distance transformer, and then forming a defocused image with an existing differentiable lens blur model~\cite{wang2023neural}. Gradients flow backwards through this whole process, allowing us to learn without explicit supervision to generate defocus effects based on content elements and the provided EXIF data. At inference time, this enables precise interactive user control over defocus effects while preserving scene contents, which is not achievable with existing diffusion models. Experimental results demonstrate that our model enables superior fine-grained control without altering the depicted scene.
\end{abstract}
\vspace{-5mm}
    
\section{Introduction}

\begin{figure}[t]
    \centering
    \includegraphics[width=3.4in]{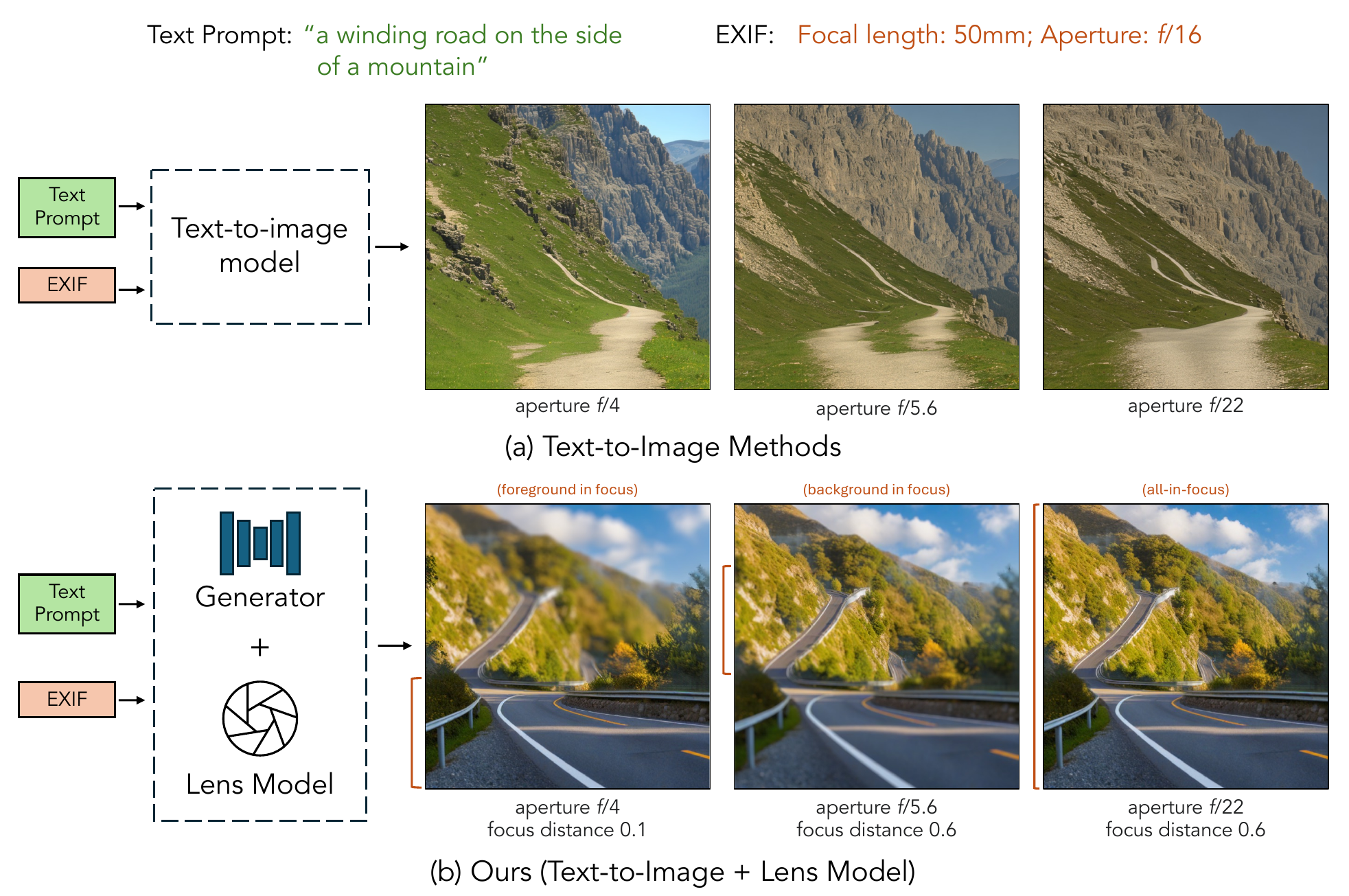}\vspace{-2mm}%
    \caption{{\bf Overview.} We propose a text-to-image model that precisely controls the amount and location of defocus blur in generated images while preserving the scene content. (a) Previous methods like Fang~et~al.~\cite{fang2024} (shown here), struggle to preserve scene content when modifying aperture settings. (b) Our model generates images with varying defocus blur while keeping the scene intact. Each image exhibits a distinct blur effect based on the specified aperture and focus distance. At lower focus distances, nearby objects appear sharper (foreground in focus), whereas at higher focus distances, the focus shifts to distant objects (background in focus). The orange bar \orange{[} on the left of each image indicates the in-focus region. Higher f-stops result in reduced blur, producing an all-in-focus image for $f/22$. 
    }
    \label{fig:teaser}
    \vspace{-8mm}
\end{figure}

Recent advances in text-to-image generative models have shown impressive capabilities in producing realistic, high-quality images~\cite{podell2023sdxl}. However, in a real photo shoot, a photographer using a DSLR or mirrorless camera might adjust defocus effects (modifying the focal plane and aperture) to direct attention to key subjects and de-emphasize background details. Current generative models lack the ability to make such fine-grained adjustments while preserving scene integrity, as would occur in an actual shoot.

Many real-world images encode details such as focal distance, aperture, lens type, and camera model in EXIF (EXchangeable Image File Format) metadata. A generative model that can \emph{conditionally} and \emph{controllably} adapt to these EXIF parameters would be highly valuable. For example, given a prompt and an aperture value of $f/1.8$, the model should generate an image that accurately reflects both the prompt and the specified aperture. If the user wishes to reduce blur, they could specify a smaller aperture (larger f-stop), prompting the model to generate the same scene with less blur. This task is inherently challenging, as the model must determine the appropriate amount of blur to add or remove, identify which regions should be blurred by recognizing salient areas, and apply blur selectively to create an aesthetically pleasing image while preserving scene content. Achieving a balance between content preservation and depth-aware spatial blur control makes this a complex but crucial problem in generative image modeling.

One of the major challenges is disentangling the image content from the camera parameters.
For example, recent work~\cite{fang2024} has attempted to address this by training camera parameter embeddings for text-to-image models to guide generation conditioned on EXIF tags. However, that approach struggles to maintain scene consistency when adding or reducing defocus blur across different camera settings.  Likewise, camera properties and scene content are closely intertwined in off-the-shelf text-to-image models. As a result, applying camera effects as a postprocessing step is ineffective, since the generated images already contain other camera properties (e.g., existing defocus blur). %

We propose a method that decouples scene generation from lens properties, enabling control over EXIF-based image generation while preserving the physicality of image formation. Our model produces plausible focal distances and grants users fine-grained control over lens parameters while maintaining scene content. An example of this control is shown in Figure~\ref{fig:teaser}.

Our approach is based on training a few-step generative model conditioned on EXIF information (specifically aperture). This model obtains a supervision signal from a differentiable lens model.
We train a fast few-step generative model conditioned on EXIF information to generate deep (or shallow) depth-of-field (DoF) images using weak supervision from simply extracting high-quality deep (and shallow) DoF subsets from the dataset. In particular, the shallow DoF images are generated by our novel focus distance transformer, which predicts a focus distance and scale used by a differentiable lens blur model based on Wang~et~al.~\cite{wang2023neural}. Because focus distances are typically omitted or unreliable in EXIF tags, we show that a weakly supervised approach can be used to learn these directly from shallow DoF images. %
By explicitly modeling defocus blur within the lens, our approach enables end-to-end learning of plausible focal distances while manipulating lens properties separately from scene content.
We base our generator on improved Distribution Matching Distillation (DMD2)~\cite{yin2024improved} since it can simultaneously learn a fast few-step model to enable user interactivity, and its unpaired DMD and GAN losses are suited for our weak supervision. %
While these components have been studied in isolation, our key contribution lies in integrating them into a unified, unsupervised training framework that enables precise and intuitive user control over the defocus blur effect --- empowering interactive generation of images with controllable DoF.

Our main contribution is a generative framework that enables learning lens properties and allowing precise interactive user control over such lens properties while preserving scene content. Our experiments show clear improvements, enabling such fine-grained user interactivity. %
Our technical contributions are:
(1) A unified generative pipeline that learns without explicit supervision to model depth of field, which is done with the help of %
(2) A novel focus distance transformer that learns to predict plausible focus distances and scales, which are used in a physically-inspired thin lens blur model~\cite{wang2023neural}.
(3) We show that weak supervision from shallow and deep depth-of-field images can successfully supervise a diffusion model with a lens model, and that this form of supervision can easily be obtained from unlabeled image datasets.

\section{Related Work}

\paragraph{Depth of field rendering.} 
Finite apertures produce defocus effects long studied for light-field rendering~\cite{adams2007general} and adaptive ray-tracing sampling~\cite{belcour20135d,chen2011efficient,zwicker2015recent}, with recent differentiable depth-of-field-aware rendering~\cite{pidhorskyi2022depth}.
AR-GAN~\cite{kaneko2021unsupervised} estimates a depth map and all-in-focus RGB image in an unsupervised manner and integrates over a finite aperture to render defocus; unlike it, we adopt a diffusion framework for text-to-image synthesis with learned modules such as the focal-distance model.
AR-NeRF~\cite{kaneko2022ar} uses a similar unsupervised approach based on NeRFs~\cite{mildenhall2021nerf}, while DOF-GS~\cite{wang2024dof} extends 3D Gaussian splatting~\cite{kerbl20233d} with a thin-lens circle-of-confusion for 3D reconstruction and defocus.
Wang~et~al.~\cite{wang2023neural} reconstruct an all-in-focus HDR radiance map and depth map from time-aperture-focus stacks, whose differentiable thin lens model we use, while Xin~et~al.~\cite{xin2021defocus} recover all-in-focus images and defocus maps from dual-pixel images via multiplane optimization.
Dr.Bokeh~\cite{sheng2024dr} uses a layered scene representation with a differentiable occlusion-aware bokeh model, which we adopt as a plug-and-play lens component during inference.
DC$^2$~\cite{alzayer2023defocuscontrol} enables post-capture defocus control by fusing dual-camera inputs with different fixed apertures but requires multi-camera hardware.
DiffCamera~\cite{wang2025diffcamera} performs arbitrary post-capture refocusing using a diffusion transformer trained on simulated multi-focus data, whereas our method learns defocus from in-the-wild data and provides fine-grained control over focus and aperture.

\vspace{-4mm}
\paragraph{Diffusion models.} We use SDXL~\cite{podell2023sdxl}, a text-to-image latent diffusion model~\cite{Rombach_2022_CVPR} as our generator architecture. We estimate depth with frozen diffusion-based monocular metric depth estimation model Metric3Dv2~\cite{hu2024metric3d}.
Voynov~et~al.~\cite{voynov2024curved} uses a per-pixel coordinate conditioning method to generate diffusion images using different optical systems, such as fisheye and concave lenses, and spherical panoramas. 
Diffusion-based image restoration methods such as SUPIR~\cite{yu2024scaling} have shown strong performance on camera lens-related restoration tasks like recovering from mixed blur, super-resolution, and/or noise degradations.

\vspace{-4mm}
\paragraph{Learning with metadata.} 
Several models leverage metadata for conditioning. DiffusionSat~\cite{khanna2023diffusionsat} generates satellite imagery conditioned on metadata such as GPS, date, cloud cover, some of which also appear in EXIF metadata for cameras. EXIF as Language~\cite{zheng2023exif} uses contrastive learning to connect image patches with EXIF data for spliced image detection. Camera Settings as Tokens~\cite{fang2024} trains a LoRA~\cite{hu2021lora} adapter and camera parameter embeddings to condition generation on focal length, aperture, ISO, and exposure.
but struggles to preserve scene content and consistently control blur. In contrast, our method decouples scene generation from defocus blur for precise control. Generative Photography~\cite{yuan2024generative}, learns camera embeddings using paired videos with varying EXIF settings, whereas our method is entirely unsupervised, leveraging only unpaired deep and shallow depth-of-field data. A concurrent work, Bokeh Diffusion~\cite{fortes2025bokeh}, conditions a diffusion model on a defocus parameter for scene-consistent bokeh control but relies on synthetic blur for supervision.

\vspace{-4mm}
\paragraph{Few-step diffusion model distillation.} 
Teacher models can be distilled to few-step students via progressive distillation~\cite{salimans2022progressive,meng2023distillation}, GAN-based approaches (ADD~\cite{sauer2025adversarial}, LADD~\cite{sauer2024fast}, Diffusion2GAN~\cite{kang2024distilling}), and distribution-matching methods DMD~\cite{yin2024one} and DMD2~\cite{yin2024improved}.
We adopt DMD2 pipeline for efficiency and—crucially—to match the shallow and deep DoF data distributions (which are unpaired) within a single pipeline.

\begin{figure}[t]
    \centering
    \includegraphics[width=3.36in]{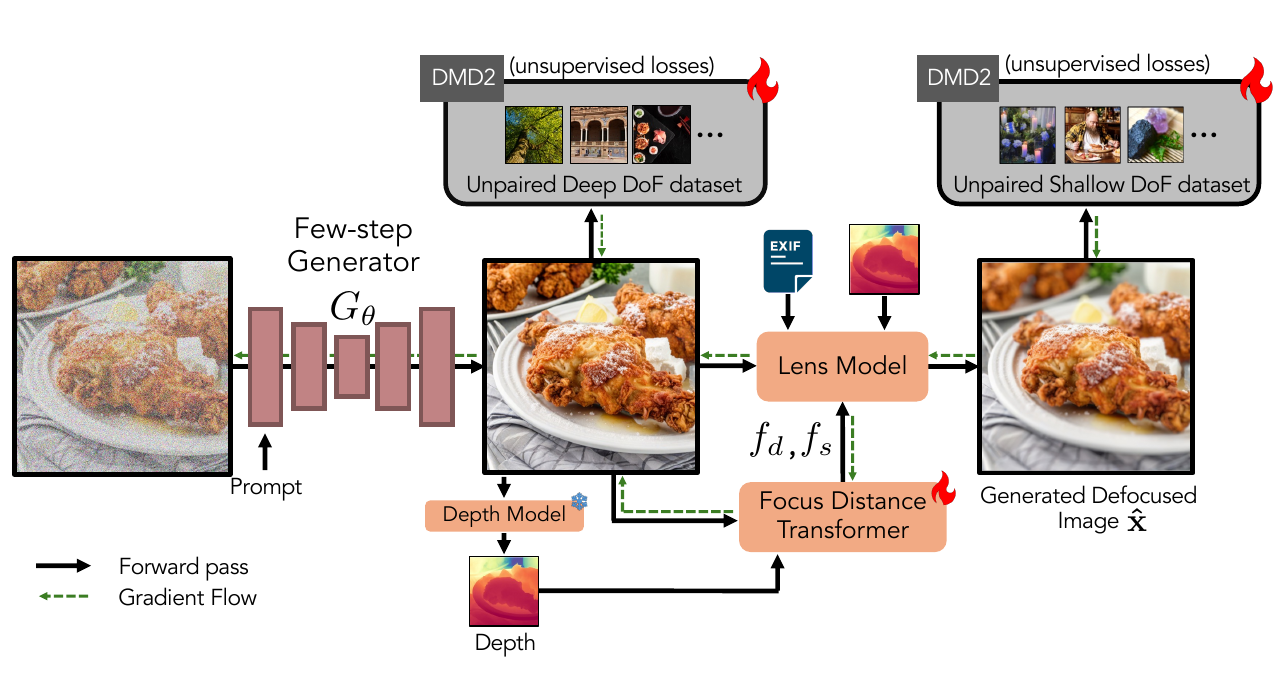}
    \vspace{-8mm}%
    \caption{{\bf Model Architecture.} We propose an image generation method that enables precise control over blur intensity and location within an image. This model obtains its supervision from a differentiable lens model and training examples of images with shallow and deep depth-of-field. We train our model to generate an \emph{all-in-focus} image using $G_{\theta}$. A depth model then predicts depth for this image, which, along with the image itself, is fed into a model that estimates the focus distance and depth scale.  Finally, a lens model combines EXIF data with these predictions to apply spatially varying blur, generating the final image. %
    We train the all-in-focus generator using unsupervised 
    DMD2~\cite{yin2024improved} losses on our unpaired Deep DoF dataset and optimize the entire pipeline with DMD2 losses on the unpaired Shallow DoF dataset.} 
    \label{fig:model_architecture}
    \vspace{-6mm}
\end{figure}

\section{Approach}
Our objective is to develop a text-to-image model conditioned on EXIF metadata, enabling precise, fine-grained control over the degree and location of image defocus blur. To achieve this, we decouple image generation from the blurring process by introducing a lens model that applies controlled blur based on depth information. Our pipeline is shown in Figure~\ref{fig:model_architecture} and begins with a fast, few-step generator that produces an all-in-focus image. This is followed by a focal distance model that identifies regions of interest and guides those areas to stay in focus. Finally, the lens model applies a spatially-varying blur according to a differentiable physically-inspired thin lens model. The entire pipeline is differentiable and learned end-to-end.%

\subsection{Generator}
Our generator builds on the few-step generator architecture from DMD2~\cite{yin2024improved}, which distills a Stable Diffusion-XL (SDXL) teacher model into a fast few-step generator by aligning the generator’s output distribution with that of the teacher model. The process involves two key components:

\xhdr{Distribution Matching Distillation (DMD).} This component distills the teacher diffusion model (SDXL) into a few-step generator $G$ by minimizing the Kullback-Liebler (KL) divergence between the diffused target distribution $p_{\text{real},t}$ and the diffused generator output distribution $p_{\text{fake},t}$ for a timestep $t$. The loss is derived from the difference between two score functions and is computed as:
\begin{equation}
    \small
    \begin{aligned}
    \nabla\mathcal{L}_\text{DMD} &=
        \mathbb{E}_t\left(\nabla_\theta \text{KL}(p_{\text{fake},t} \| p_{\text{real},t}) \right) \\ &= 
        \mathbb{E}_t\left(\int \big(s_{\text{fake}}(x_t,t) - s_{\text{real}}(x_t,t)\big) \frac{dG_\theta(z)}{d\theta} \hspace{.5mm} dz\right),
    \end{aligned}
    \label{eq:kl-grad}
\end{equation}
where $s_{\text{real}}$ and $s_{\text{fake}}$ are the score functions approximated using diffusion models $\mu_{\text{real}}$ and $\mu_{\text{fake}}$. Here, $t \sim \mathcal{U}[0, T]$, $x_t=F(G_\theta(z),t)$ where $F$ is the forward diffusion process (adding noise), $z\sim \mathcal{N}(0,\mathbf{I})$ is random Gaussian noise, $\theta$ denotes the generator parameters. $\mu_{\text{real}}$ is the pre-trained frozen diffusion model (SDXL) used as the teacher and $\mu_{\text{fake}}$ is a dynamically trained diffusion model that is optimized alongside $G$ using the denoising score-matching loss ($\mathcal{L}_\text{denoise}$), conditioned on the output of $G$.

\xhdr{GAN Loss.} A discriminator $D$ is trained to distinguish between real images and those generated by $G$. A classification branch is added to the bottleneck features of the fake diffusion denoiser $\mu_{\text{fake}}$, with the loss given by:
\begin{equation}
\label{eq:gan_loss}
\begin{aligned}
\mathcal{L}_{\text{GAN}} &= \mathbb{E}_{x \sim p_{\text{real}}}[\log D(F(x, t))] \\ &+ \mathbb{E}_{z \sim p_{\text{noise}}}[-\log(D(F(G_\theta(z), t)))],
\end{aligned}
\end{equation}
where $t \sim \mathcal{U}[0, T]$ and $D$ represents the discriminator.

\xhdr{Training.} The generator $G$ and $\mu_{\text{fake}}$ are initialized with a pre-trained diffusion model. During training, $G$ minimizes $\mathcal{L}_\text{DMD} + \mathcal{L}_{\text{GAN}}$, while $\mu_{\text{fake}}$ minimizes $\mathcal{L}_\text{denoise} + \mathcal{L}_{\text{GAN}}$. %
This setup can be summarized as DMD2~($G$, $\mu_{\text{fake}}$) where $G$ is a few-step generator conditioned on a text prompt $t$ and generates an all-in-focus image $\mathbf{x}$ from a noise sample $z$:

\begin{equation}
\mathbf{x} = G_{\theta}(z, t)
\end{equation}

\subsection{Focus Distance Transformer}
To achieve aesthetically pleasing blur without rendering images entirely out of focus, the model must determine the salient regions in the image and decide which areas should remain sharp. In photographic terms, this requires selecting a depth for the focal plane to ensure key objects appear in focus. To this end, we develop a focus distance prediction model that outputs the focus distance (in depth units) at which the most important objects are located. We first extract a monocular depth map $\mathbf{d}$ from the generated all-in-focus image using a frozen Metric3Dv2~\cite{hu2024metric3d} model. Next, the focus distance transformer takes as input the generated all-in-focus image and depth map, and produces the focus distance $f_d$. Additionally, it produces a scale factor $f_s$, which allows the generative model to align the metric depth and focal length and still generate plausible results when either is inaccurate. Our model fine-tunes the Visual Saliency Transformer (VST)~\cite{Liu_2021_ICCV}, which we adapt for this specific task.

\xhdr{Focus Distance.} To predict the focus distance, we compute a saliency map from our VST network decoder and take its weighted average with the depth map, yielding a focus distance prediction within the range of the depth map. 
\begin{equation}
f_d = \lVert \mathbf{d} \boldsymbol{\odot} \text{VST}(\mathbf{x}, \mathbf{d}) \rVert_1 / \lVert\text{VST}(\mathbf{x}, \mathbf{d}) \rVert_1
\end{equation}

To supervise learning the focal distance, we use a frozen pre-trained copy of the VST network to calculate a reference focus distance from this weighted average. We apply a weak supervision for $f_d$ with a Huber loss between the reference focus distance from the pretrained VST network.%

\xhdr{Focus Distance Scale.} For the focus distance scale, we extract the saliency token $\text{VST}_\text{SAL}$ 
 from the first layer of the VST decoder and use a linear head to predict its value.  This is learned through end-to-end training with DMD2 losses.
\begin{equation}
f_s = \text{MLP}(\text{VST}(\mathbf{x}, \mathbf{d})_{\text{SAL}})
\end{equation}

\subsection{Lens Model}
\label{sec:lens_model}

To achieve precise control over defocus blur within the image generation pipeline, we require a differentiable lens blur model that can be trained jointly with the generator, allowing gradients to flow back from the defocused image. For this, we use the thin lens model from ~\cite{wang2023neural}, which produces defocus blur in the image through differentiable kernels. The lens model is parameterized by focal length $f$, focus distance $f_d$ and aperture $N$. We extend this model to include the focus distance scale $f_s$ mentioned above. %
The circle of confusion (CoC) disk diameter is computed as:

\begin{equation}
\textbf{coc} = \frac{|\mathbf{d}-f_d|}{\mathbf{d}} \frac{f^2}{N(f_s \cdot f_d - f)}
\end{equation}

After computing the \textbf{coc} for each pixel, we simulate defocus blur using a spatially-varying convolution $\text{W}$ as $\mathbf{\hat{x}} = W \ast \mathbf{x}$ as implemented in ~\cite{wang2023neural}. 
The kernel $\text{W}$ is a unit-energy disk with \textbf{coc} as its diameter and a differentiable soft boundary based on pixel distance from the kernel center.
We denote the combination of the lens model and focus distance model as $\hat{G}$ such that $\hat{G}(\mathbf{x}) = \mathbf{\hat{x}}$. For convenience, we refer to this lens model as the \emph{TAF Lens}~\cite{wang2023neural}.

\xhdr{Swapping the Lens Model at Inference.}
Our framework allows swapping out the lens model at inference time. %
For example, we use Dr.Bokeh~\cite{sheng2024dr} as an alternative lens model to demonstrate results with our generator during inference. The TAF lens model is fast and therefore suitable for training, whereas the Dr.Bokeh lens model is slower but produces higher-quality results due to its layered representation and inpainting.%

\subsection{Deep and Shallow Depth-of-Field Datasets}
\label{sec:datasets}
We desire a generator $G$ that produces all-in-focus, or deep depth-of-field (DoF) images $\mathbf{x}$, and a lens model that generates shallow DoF images $\mathbf{\hat{x}}$. We train these models using only a weak form of supervision. We construct datasets of deep and shallow DoF images from a large pool of uncurated images. We provide details about dataset curation in Supplemental Section~\ref{sec:suppl_datasets}.

\subsection{Putting Everything Together}
In summary, our generator $G$ produces deep depth-of-field (DoF) images, $\mathbf{x}$, while the lens model $\hat{G}$ generates shallow DoF images, $\mathbf{\hat{x}}$. We aim for $\mathbf{x}$ to have a deep DoF and $\mathbf{\hat{x}}$ to have a shallow DoF. To achieve this, we train $G$ on the Deep DoF dataset using DMD2~($G$, $\mu_{\text{fake}}$) and train $\hat{G}$ on the Shallow DoF dataset using DMD2~($\hat{G}$, $\mu_{\text{fake}}$). %
Our overall loss is: 
\begin{equation}
\lambda_1\text{DMD}2(G, \mu_{\text{fake}}) + \lambda_2\text{DMD}2(\hat{G}, \mu_{\text{fake}}) + \lambda_3 L_{\text{Huber}}
\end{equation}

\vspace{1mm}
\section{Experiments}

We use a commercially available stock-photography dataset for training. We extract 1.5M samples for the deep DoF and shallow DoF datasets. We pretrain the generator $G$ on the deep DoF dataset. Then we jointly fine-tune $G$ on the deep DoF dataset and $\hat{G}$ on the shallow DoF dataset.

\begin{table*}[ht]
\small
\centering 
{
\setlength{\tabcolsep}{2.5pt}

\caption{{\bf Results.} We compare our method to teacher and distilled-SDXL models, variants of these with lens models, and other baselines.} %
\vspace{-2ex}
\begin{tabularx}{1\linewidth}{c c X l l cccc ccccc} 
\toprule
& \multirow{2}{*}{\textbf{\#}} & \multirow{2}{*}{\textbf{Method}} & \textbf{Lens} & \textbf{EXIF}   & \textbf{Blur} $\uparrow$   & \textbf{Content} $\uparrow$ & \multirow{2}{*}{\textbf{LPIPS} $\downarrow$}  & \multirow{2}{*}{$\textrm{\bf FID}_\textrm{\bf DDoF} \downarrow$} & \multirow{2}{*}{$\textrm{\bf FID}_\textrm{\bf SDoF}\downarrow$} \\
& & & \textbf{Model} & \textbf{Conditioning} & \textbf{Monotonicity}& \textbf{Consistency}  \\

\midrule
\parbox[t]{1mm}{\multirow{5}{*}{\rotatebox[origin=c]{90}{Gen.}}}
& 1 & SDXL~\cite{podell2023sdxl} & - & EXIF as text & 48.47	& 82.91 & 0.1398 & 17.88 & 	18.17 \\
& 2 & SDXL~\cite{podell2023sdxl} & - & DoF as text & 53.80 & 81.93 &	0.0563 & 17.87 & 18.17 \\
& 3 & 4-step SDXL (Distilled) & - & DoF as text  & 52.85 & 79.67	& 0.0829 & 14.19 & 18.06 \\
& 4 & 4-step SDXL (Distilled) & - & EXIF embedding & 53.76 & 88.30 & 0.0344 & 15.38 & 16.92 \\
& 5 & Camera Settings as Tokens~\cite{fang2024}  & - & EXIF embedding & 56.23 & 66.97 & 0.2311 & 28.07 & 28.55 \\ %
\midrule
\parbox[t]{1mm}{\multirow{5}{*}{\rotatebox[origin=c]{90}{Gen. + Blur}}}
& 6 & SDXL~\cite{podell2023sdxl} & TAF~\cite{wang2023neural}  & EXIF as text & 67.04 &	79.54 & 0.1409 & 26.02 &	30.23 \\
& 7 & SDXL~\cite{podell2023sdxl} & Dr.Bokeh~\cite{sheng2024dr}  & EXIF as text & 79.40 &	81.07 & 0.1506 & 18.04 &	19.5 \\
& 8 & SDXL (EXIF-Fixed)~\cite{podell2023sdxl} & Dr.Bokeh~\cite{sheng2024dr}  & EXIF as text & 81.10 &	87.04 & 0.0356 & 19.01 &  18.8 \\
& 9 & Deep-DoF Gen & TAF~\cite{wang2023neural}  & Aperture, Focal Length &  82.12 &	87.62 & 0.0338 & 18.54 &	24.04 \\
\noalign{\vskip 0.3ex}
\cdashline{3-10}\noalign{\vskip 0.7ex}
& 10 &Ours & TAF~\cite{wang2023neural} & Aperture, Focal Length & 93.91 & {\bf 92.34} & {\bf 0.0064} & {\bf 13.24} & {\bf 16.69} \\
& 11 &Ours & Dr.Bokeh~\cite{sheng2024dr} & Aperture, Focal Length & {\bf 96.89} & 91.42 & 0.0144 & 13.67 & 17.51 \\

\bottomrule
\label{tab:results}
\end{tabularx}
}
\vspace{-6mm}

\end{table*}

\begin{table*}[ht]
\small
\centering 

\caption{{\bf Ablations.} Every quantitative metric becomes worse as components are removed from our full model (Ours + TAF~\cite{wang2023neural}).} %
\vspace{-2.0ex}
\begin{tabular}{c c cccc ccccc} 
\toprule
\multirow{2}{*}{\textbf{\#}} & \textbf{DoF}  & \textbf{Deep DoF}  & \textbf{Lens}  & \textbf{Focus Distance}  & \textbf{Blur} $\uparrow$  & \textbf{Content} $\uparrow$  & \multirow{2}{*}{\textbf{LPIPS} $\downarrow$} & \multirow{2}{*}{$\textrm{\bf FID}_\textrm{\bf DDoF} \downarrow$} & \multirow{2}{*}{$\textrm{\bf FID}_\textrm{\bf SDoF}\downarrow$} \\
& \textbf{datasets} & \textbf{Pretraining} & \textbf{Model} & \textbf{Transformer} & \textbf{Monotonicity}  & \textbf{Consistency} & & \\

\midrule

1 & \cmark & & & & 56.34 & 86.45 & 0.1432 & 15.67 & 16.95\\ %
2 & \cmark & \cmark & & & 57.51 & 88.51 & 0.0862 & 14.23 & 17.31 \\ %
3 & \cmark & \cmark & \cmark & & 73.21 & 90.65 & 0.0138 & 14.10 & 17.54 \\ %
\midrule
4 & \cmark & \cmark & \cmark & \cmark & 93.91 & 92.34 & 0.0064 & 13.24 & 16.69 \\
\cdashline{1-10}\noalign{\vskip 0.7ex}
5 & SDoF only & \cmark & \cmark & \cmark & 82.50 & 90.04 & 0.0031 & 14.05 & 16.99 \\

\bottomrule
\label{tab:ablations}
\end{tabular}

\vspace{-8mm}

\end{table*}

\subsection{Evaluation}
To assess the efficacy of our model, we evaluate two key properties: (1) how the blur amount changes in response to EXIF information, particularly aperture value, and (2) whether scene content remains unchanged as blur is adjusted. %
The lens models TAF~\cite{wang2023neural} and Dr.Bokeh~\cite{sheng2024dr} that we use have been previously validated in isolation for paired data and metrics. We define these metrics:

\vspace{-2mm}
\xhdr{Blur Monotonicity.} To verify that the model appropriately reflects changes in aperture, we check if decreasing the aperture value leads to increased defocus in the generated images. This is tested by evaluating whether the signal energy decreases as the aperture value decreases. Formally, for $\text{ap} \in \{\text{ap}_i\}_{i=1}^{N}$ with $\text{ap}_{i} < \text{ap}_{i+1}$ and $I_{\text{ap}_i}$ representing the image generated for each aperture value, we count the percentage of instances where  $E(I_{\text{ap}_{i}}) < E(I_{\text{ap}_{i+1}})$. Here, $E(\cdot)$ denotes the signal energy, which can be computed either as the sum of squared magnitudes of the 2D Fourier spectrum as $\sum_{\vec{k}} \left| \text{FFT2}(\cdot)_{\vec{k}} \right|^2$, where the sum is over all frequencies $\vec{k}$, or, by Parseval's theorem, equivalently as the sum of squared magnitudes in the spatial domain, $\sum_{p} \left|(\cdot)_{p} \right|^2$ over pixels $p$. We include in Supplemental Sec.~\ref{sec:evaluation-metrics} 
a statistical analysis on real photos and a proof (for a simple case of a scene with uniform depth) that this metric decreases from all-in-focus images to defocused images. A higher value for this metric indicates better model performance in controlling blur based on aperture. We compute this metric given the 8 aperture values: [1.8, 2.8, 4, 5.6, 8, 11, 16, 22].

\begin{figure*}[ht]
    \centering
    \includegraphics[width=\linewidth]{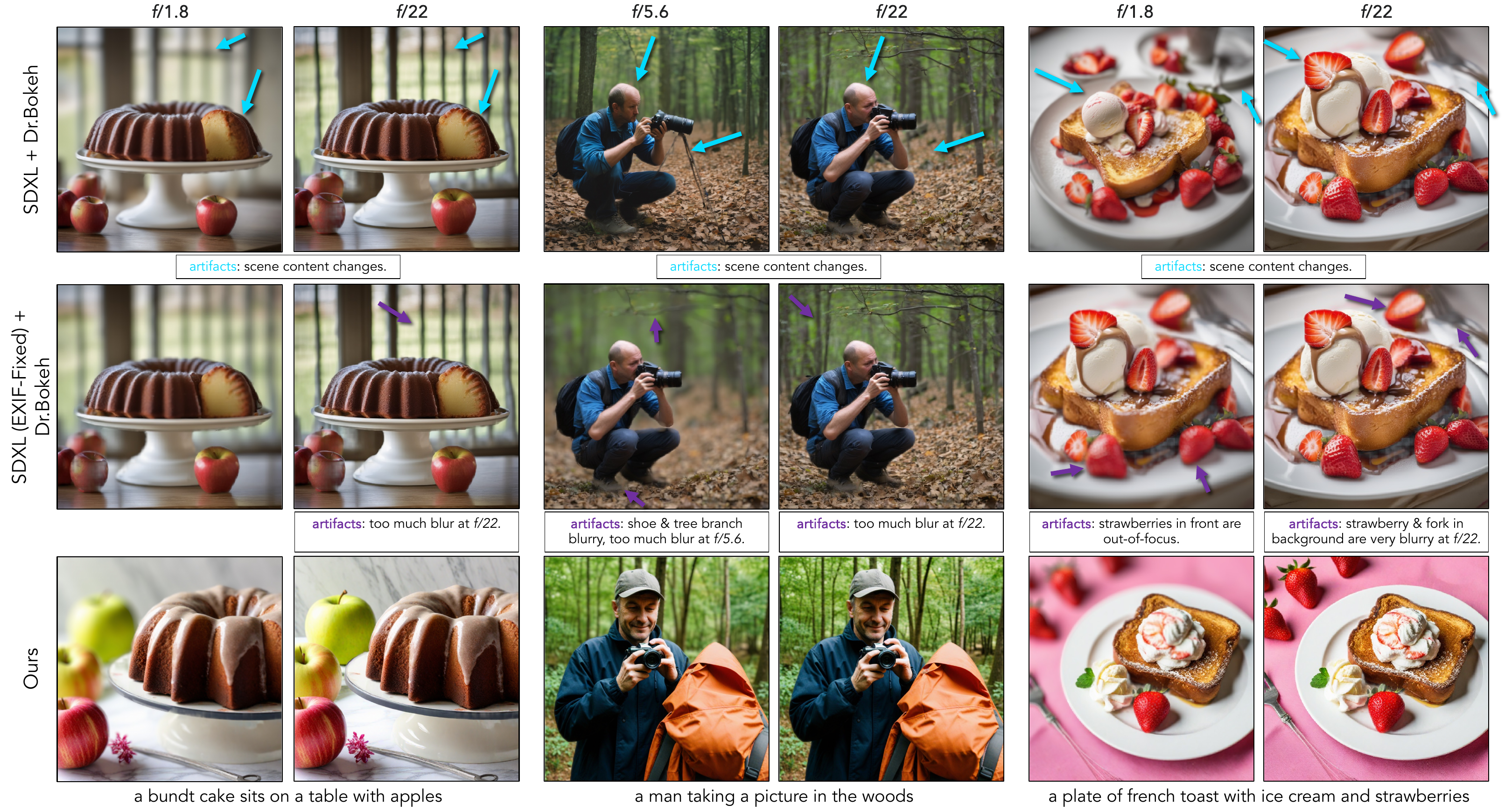}
    \vspace{-7mm}%
    \caption{{\bf Qualitative Comparisons with SDXL + Dr.Bokeh.} SDXL + Dr.Bokeh struggles to preserve scene content across aperture settings, with unrealistic defocus even when EXIF input to the generator is fixed. In contrast, our method effectively reduces blur while maintaining scene structure as aperture increases.  At $f/22$, we produce sharp images as expected, while SDXL + Dr.Bokeh introduces noticeable background blur. \blue{Blue arrows}: undesired content changes, \purple{purple arrows}: unrealistic defocus effects.
    } 
    \label{fig:qual_comparison}
    \vspace{-4mm}
\end{figure*}

\begin{figure*}[t]
    \centering
    \begin{center}
    \hspace*{-2mm} 
\includegraphics[trim={0 0 0 0},clip,width=1.02\textwidth]{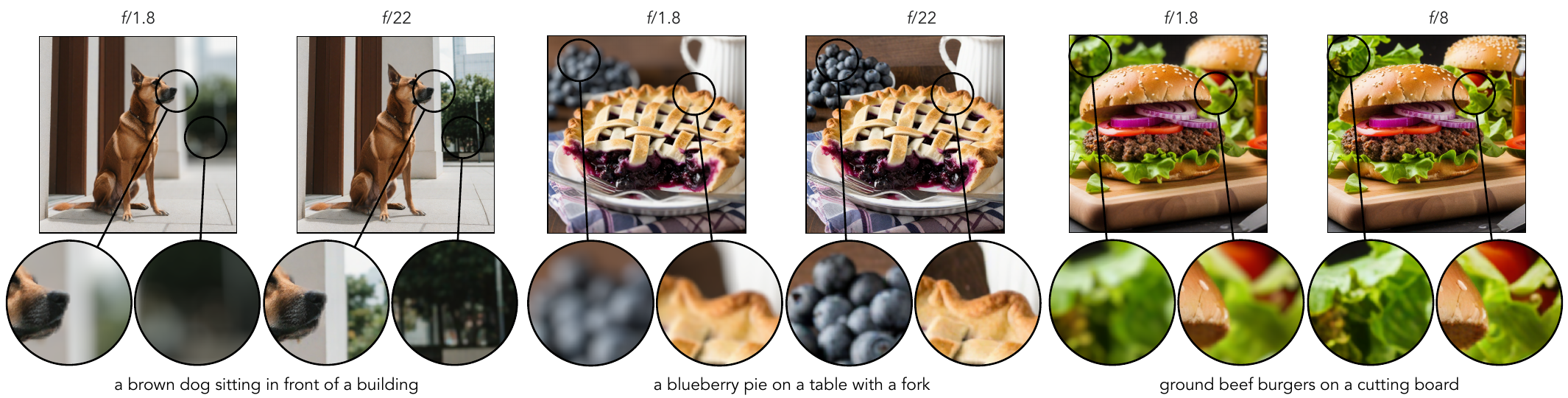}
    \end{center}
    \vspace{-6mm}%
    \caption{{\bf Qualitative Results.} We show our model’s performance across different apertures and prompts. As the aperture increases, background blur decreases while the overall scene remains consistent. {\em Please see the website videos for more results.}} 
    \label{fig:qual_results}
    \vspace{-5mm}
\end{figure*}

\vspace{-2mm}
\xhdr{Content Consistency.} This metric assesses whether the scene content remains unchanged as the amount of defocus varies in the image. %
For a set of aperture values [4, 5.6, 8, 11, 16, 22], we generate an image at each aperture, compute its semantic segmentation~\cite{chen2023semantic}, and compare each pixel’s segmentation class across images generated at different apertures. If a pixel’s class remains the same, we count its contribution as 1; otherwise, as 0. We calculate the mean across all samples. Higher values for this metric indicate a stronger ability to separate defocus effects from scene-content changes.

\vspace{-2mm}
\xhdr{LPIPS.} Our content consistency metric uses semantic segmentation, which has a benefit of being less sensitive to changes in blur, but might not capture certain fine-grained textural changes. Thus, we also evaluate LPIPS~\cite{zhang2018unreasonable}. We report the mean of the $\mathrm{LPIPS}(I_{\text{ap}_{i}}, I_{\text{ap}_{i+1}})$ metrics between adjacent apertures across $i=0, \ldots, 4$ for the same aperture values $\text{ap}_{i}$ as in content consistency.

\vspace{-2mm}
\xhdr{FID.} We compute the Fréchet Inception Distance (FID)~\cite{heusel2017gans} scores on 10,000 samples from both the Deep DoF and Shallow DoF datasets to assess image quality. A low {$\textrm{\bf FID}_\textrm{\bf DDoF}$ indicates that the model generates sharp, blur-free images, while a low {$\textrm{\bf FID}_\textrm{\bf SDoF}$ suggests that the model effectively reproduces images with regional blur.%

\section{Results}

\vspace{-2mm}
\xhdr{Baselines.} We categorize baselines into two groups: (1) methods that function purely as image generators and (2) methods that integrate a lens blur model with the generator (``Gen + Blur"), which are inspired by our framework but use more pretrained models rather than fully end-to-end training as we do. We evaluate our method against the following baselines.

\vspace{-2mm}
\xhdr{SDXL}: We use Stable Diffusion XL to evaluate this dataset, running it for 50 denoising steps. Since it does not natively support EXIF encoding, we test two methods for incorporating EXIF data. In the first variant, we convert Aperture and Focal Length to strings and add them as additional text input. In the second variant, we use a Depth-of-Field (DoF) prompt based on the aperture value: for apertures greater than 10, we set the prompt to ``Deep Depth-of-Field”, and for apertures less than 10, we set it to ``Shallow Depth-of-Field'' and add it to the text.

\vspace{-2mm}
\xhdr{4-step SDXL (Distilled)}: We distill the SDXL teacher model into a 4-step generator and train it with DMD2 losses. We train two variants: one that includes a DoF prompt in the text input and another that uses an EXIF projection layer for improved EXIF data processing where we add sinusoidal positional encoding for Aperture and Focal Length and process them using two projection layers and add it to the time embedding. See the supplemental section~\ref{sec:4-step-sdxl} for details regarding the EXIF projection.

\xhdr{Camera Settings as Tokens}: We evaluate ~\cite{fang2024} as a baseline, where camera parameter embeddings and a LoRA are trained to guide image generation based on EXIF metadata. The model takes aperture, focal length, ISO rating, and exposure time as inputs in addition to the text prompt and uses Stable Diffusion 2 ~\cite{Rombach_2022_CVPR} as the base generator.

\xhdr{SDXL + TAF Lens}: This baseline is inspired by our framework, but utilizes pretrained models in a plug-and-play manner. We use SDXL as the image generator and add the TAF lens model~\cite{wang2023neural} to simulate defocus effects. Depth is predicted using the Metric3Dv2 model \cite{hu2024metric3d}. To determine an optimal focus distance, we use a frozen Visual Saliency Transformer~\cite{Liu_2021_ICCV} model to generate a saliency mask, and we use the mean depth value of the salient region as the focal distance prediction.

\xhdr{SDXL + Dr.Bokeh}: This baseline is like SDXL + TAF Lens, except we replace the lens model with Dr.Bokeh~\cite{sheng2024dr}.

\xhdr{SDXL (EXIF-Fixed) + Dr.Bokeh}: As in SDXL + Dr.Bokeh, but the EXIF metadata fed to SDXL is fixed across aperture settings to preserve scene content, closely following the strategy of our framework.

\xhdr{Deep-DoF Gen. + TAF Lens}: Inspired by our framework, we train a model similar to ours, but instead of training a model to predict focus distance, we again use the mean depath of the salient region from the frozen VST model~\cite{Liu_2021_ICCV}.

\begin{figure*}[t]
    \centering
    \includegraphics[width=\linewidth]{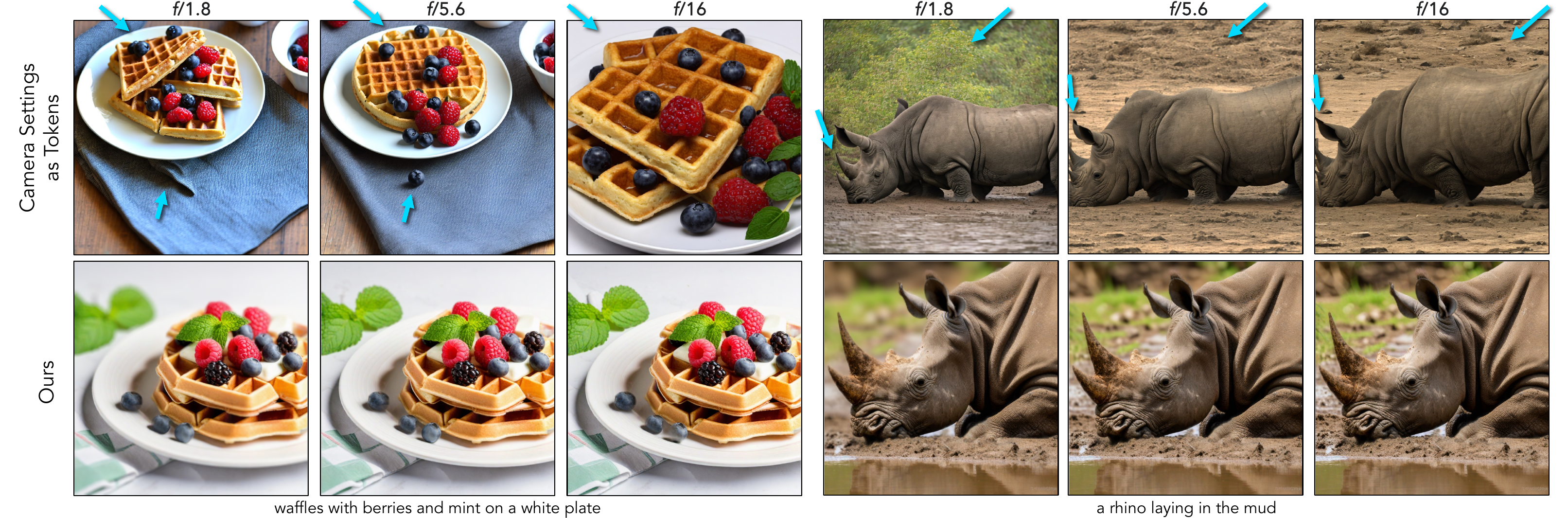}\vspace{-3mm}%
    \caption{{\bf Comparison to Camera Settings as Tokens~\cite{fang2024}}. Despite being trained with EXIF conditioning, ~\cite{fang2024} struggles to control blur effectively and often alters the image significantly, making it entirely different. Whereas, our model preserves scene content while successfully decoupling defocus blur from the scene, modifying only the blur without changing the overall composition. \blue{Blue arrows}: undesired content changes.}
    \vspace{-3mm}
    \label{fig:camera-tokens-comparison}
\end{figure*}

\begin{figure*}[h]
    \centering
    \includegraphics[width=\linewidth]{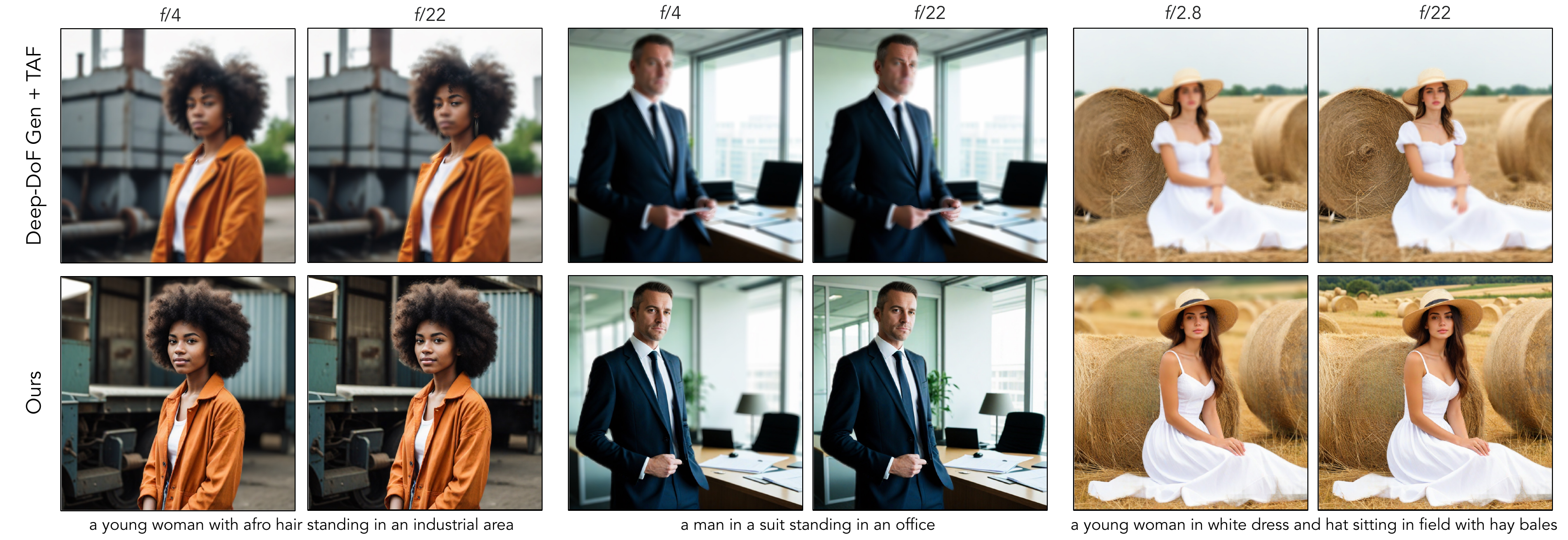}
    \vspace{-7mm}
    \caption{{\bf Comparison to Deep-DoF Gen. + TAF.} We compare our method with Deep-DoF Gen + TAF model (main paper Table~\ref{tab:results}, row 8). Our method is able to consistently vary blur amount while preserving the scene, whereas the generated images from the baseline appear out-of-focus without focusing on salient regions in the image.}
    \label{fig:fig11}
    \vspace{-5mm}
\end{figure*}

\xhdr{Quantitative Results.} We evaluate all baselines in Table~\ref{tab:results}. The SDXL baselines without lens models, both with and without distillation, and with and without EXIF conditioning (rows 1-4) all struggle to generate different defocus effects as aperture and focal distance are changed. %
Camera Settings as Tokens (row 5), despite being explicitly conditioned on EXIF metadata, struggles to adjust blur effectively based on aperture changes and struggles to maintain scene integrity. This is seen in its low Blur Monotonicity and Content Consistency scores (see also Figure~\ref{fig:camera-tokens-comparison}).

The SDXL + TAF Lens baseline (row 6), a fully pretrained model, outperforms earlier methods by leveraging additional information such as depth and focus distance, enabling more consistent blur application as the aperture decreases. Replacing the lens model with Dr.Bokeh (row 7) further improves blur rendering. Further fixing the EXIF input to SDXL (row 8) preserves the scene content (following our framework), but often generates out-of-focus blur (Fig.~\ref{fig:qual_comparison}). %
The baseline in row 9 resembles our approach, except it does not have focus distance prediction and only trains on the DDoF dataset. Without learning an appropriate scale for focus distance (as our model does), it struggles, often producing out-of-focus blur effects, and also obtains a poor FID score on the SDoF dataset. Our model with TAF Lens (row 10) outperforms all baselines, consistently increasing blur as the aperture decreases while preserving scene content across aperture changes. Replacing TAF Lens with Dr.Bokeh during inference (row 11) further improves the blur metric, due to Dr.Bokeh’s superior blur rendering capabilities.

\xhdr{Ablations.}
We ablate key components of our model in Table~\ref{tab:ablations}.  All ablations are trained on our Shallow and Deep DoF datasets (except row 5). 
Rows 1 and 2 exclude the lens model and instead use aperture and focal length as additional text inputs to control blur behavior. These configurations show reduced Blur Consistency, indicating that representing aperture solely as text does not provide sufficient information to consistently simulate blur changes. Notably, row 2 outperforms row 1, demonstrating the benefit of pretraining on the Deep DoF dataset. This pretraining provides our generator $\mathbf{G}$ with a strong prior for generating all-in-focus images during fine-tuning.

In row 3, the addition of the lens model improves Content Consistency when changing aperture and enhances the model’s ability to apply blur consistently at a given aperture. Row 3 removes the focal distance model and replaces it with mean depth, which compared to the full model gives worse visual quality (FID) and worse blur monotonicity as the focus distance scale is not learned, causing unintended out-of-focus blur and limiting the lens model’s ability to increase blur beyond a certain point. Finally, in row 4, adding the focus distance model allows the lens model to better focus on salient regions, reducing out-of-focus artifacts. Row 5 shows pretraining the generator on Deep-DoF and freezing it, followed by finetuning only on Shallow DoF, leads to worse all-in-focus image generation.

\xhdr{Qualitative Results.} We show qualitative results for our method and baselines. 
In Figure~\ref{fig:qual_comparison}, we compare our method, using Dr.Bokeh as the lens model against an SDXL generator with a Dr.Bokeh lens model (Table~\ref{tab:results}, row 7). Our method effectively preserves scene content while adjusting the blur effect based on the specified aperture value, enabling precise control over defocus blur during image generation. In contrast, SDXL + Dr.Bokeh struggles to increase blur as the aperture changes and alters the scene content. Even when we fix the EXIF input to SDXL, the defocus blur is not realistic.  
Figure~\ref{fig:qual_results} illustrates the blurring effects of our method at different aperture values. It keeps the foreground (or salient region) in focus while modifying the blur in non-salient regions without altering the overall scene.

In Figure~\ref{fig:camera-tokens-comparison}, we compare against the Camera Settings as Tokens~\cite{fang2024} baseline. This method fails to maintain scene content, significantly altering the generated images while providing limited control over blur. Figure~\ref{fig:fig11} further compares our method with the Deep-DoF Gen + TAF baseline, which struggles to produce aesthetically pleasing focal planes, because it neither learns an appropriate focus-distance scale nor is fine-tuned on a shallow DoF dataset. In Supplemental Section~\ref{sec:comp-to-our-model}, we extend the analysis: (1) evaluating a ControlNet-based variant conditioned on scene depth to improve the Camera Settings as Tokens approach, and (2) adding additional comparisons with real photographs. Even with conditional depth, camera-setting embeddings, and a text prompt, the ControlNet variant still fails to preserve scene content consistently.

\section{Limitations, Future Work, Conclusion}

Due to strong priors in the SDXL model that we distill from, the generated images from the all-in-focus generator $G$, on rare occasions, have a background blur already present, which limits how in focus of an image can be obtained. This could likely be mitigated by further increasing the DDoF dataset size. The focus distance scale is learned from weak signals from DMD2 losses and does not have a direct supervisory signal, which can in some cases result in blurry photos without a good focal plane. Our unsupervised method has the advantage of not needing explicit collection of focus distance values, which are typically not present or accurate in EXIF tags. However, future work might consider the  task of capturing RGBD photos with metric depth along with precise measurement of focus distances to enable supervised learning. Even with the high-quality Dr.Bokeh renderer, high-frequency details along occlusion boundaries such as the dog fur in Figure~\ref{fig:qual_results} can occasionally be blurry; this is due to limited depth-map resolution and could be resolved by using a depth estimator that resolves such high-frequency details \cite{Miangoleh_2021_CVPR}. We learn only disc-shaped bokehs, but a benefit of our framework is that the lens model can be swapped out at inference time (Sec.~\ref{sec:lens_model}), so in future work, other lens models or stylized bokehs such as hexagons or hearts could be used~\cite{yang2016virtual}.

In conclusion, our approach is a flexible framework that decouples the image generator from a lens model. This approach is useful for obtaining explicit and fine-grained control of aperture and focus properties in diffusion generative models. Our framework shows the benefits of explicit focus distance estimation and our joint end-to-end learning on both DDoF and SDoF datasets. %

{
    \small
    \bibliographystyle{wacv_template/ieeenat_fullname}
    \bibliography{main}
}

\clearpage
\appendix

\maketitlesupplementary

\setcounter{figure}{0}
\renewcommand{\thefigure}{A\arabic{figure}}
\setcounter{table}{0}
\renewcommand{\thetable}{A\arabic{table}}

\setcounter{section}{0} 

\begin{figure*}[!t]
    \centering
    \includegraphics[width=\linewidth]{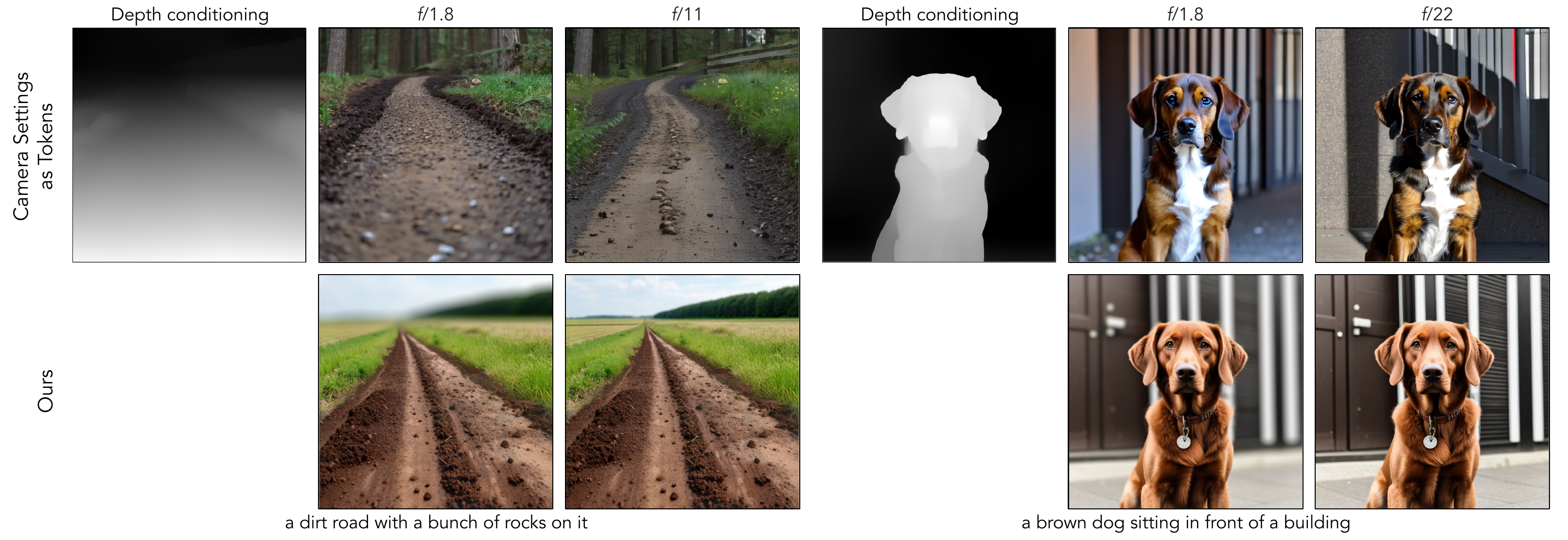}%
    \caption{{\bf Comparison with Camera Settings as Tokens~\cite{fang2024} based ControlNet~\cite{zhang2023adding}}. We compare our method to a depth-conditioned ControlNet that uses Camera Settings as Tokens embeddings.  While the ControlNet effectively adheres to scene depth, it alters scene content within those depth planes. Notably, depth is used as a conditioning input for ControlNet but not for our generator.}
    \label{fig:camera-tokens-controlnet-comparison}
    \vspace{2mm}
\end{figure*}

\begin{figure*}[h]
    \centering
    \includegraphics[width=\linewidth]{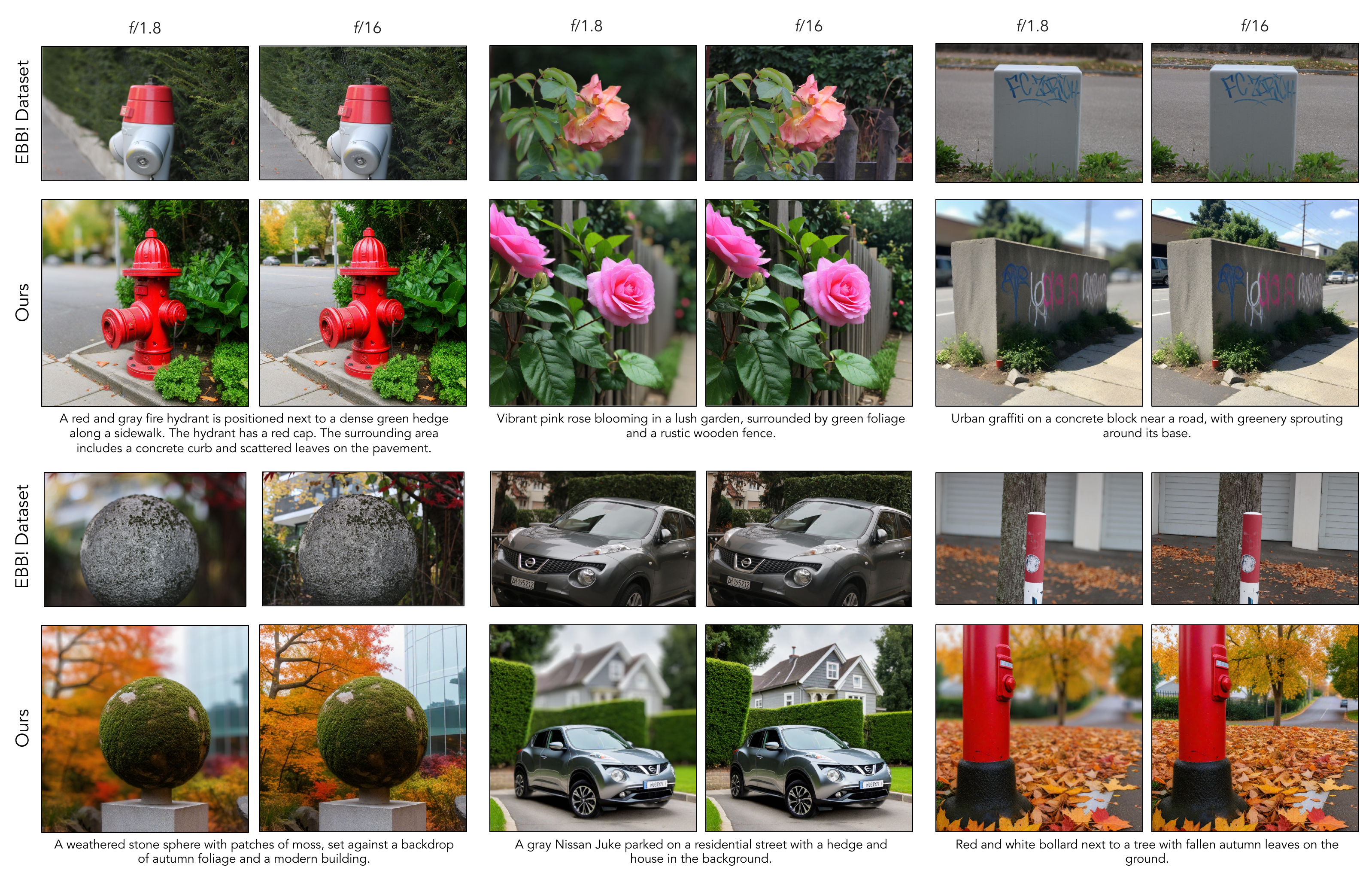}
    \vspace{-6mm}
    \caption{{\bf Comparison to EBB! Dataset.}  
    Using the EBB! dataset~\cite{ignatov2020rendering}, which provides image pairs captured at two apertures: $f/16$ (all-in-focus) and $f/1.8$ (shallow depth of field), we first caption the $f/16$ image with the InternVL3~\cite{zhu2025internvl3} model (shown below the ``Ours'' images).  
    We then use that caption as the text prompt, along with the specified aperture ($f/16$ or $f/1.8$), to generate images with our method.  
    Our results closely match the expected defocus characteristics, producing pronounced blur at $f/1.8$ and sharp, well-focused images at $f/16$.}
    \label{fig:fig12}
\end{figure*}

\begin{figure*}[h]
    \centering
    \includegraphics[width=\linewidth]{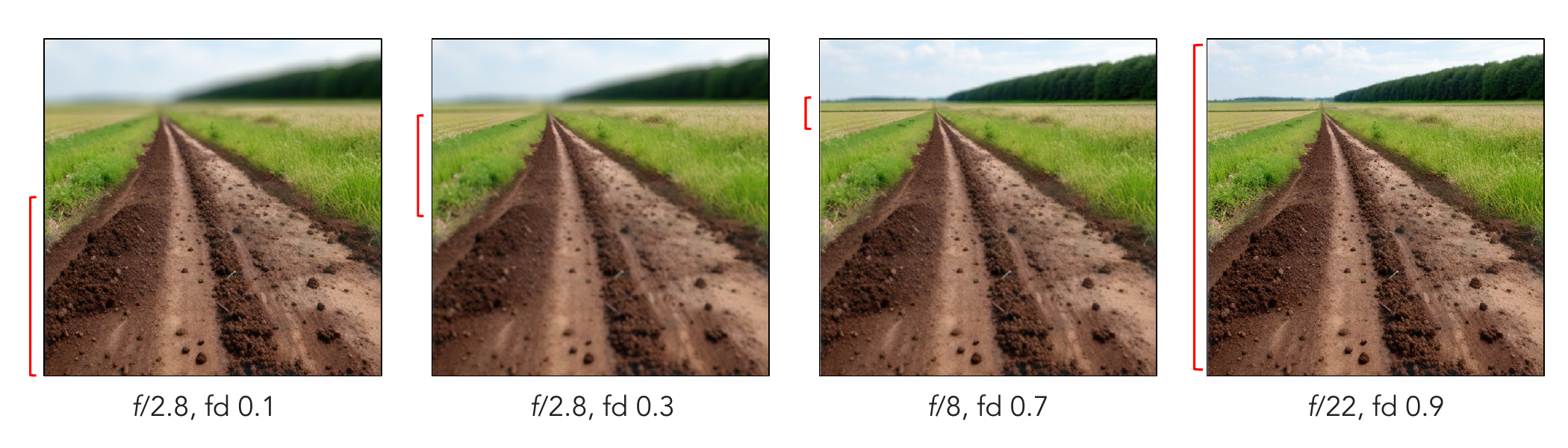}%
    \vspace{-2mm}
    \caption{{\bf Varying focus distances in the generation process.} We show that varying the focus distance in our model produces images with focus shifting across different focal planes. As the focus distance increases from low to high values, the focal plane transitions from the near to the far plane. The red bar \red{[} highlights the region of the image that is in focus.}
    \label{fig:focus_distance_change}
\end{figure*}

\section{Video}
We provide videos on our webpage ({\normalsize \texttt{\href{https://ayshrv.com/defocus-blur-gen}{link}}}) showing the controllability of defocus blur using our model. We also include qualitative examples demonstrating defocus control in generated images for several prompts.

\section{Comparisons to our model}
\label{sec:comp-to-our-model}

\xhdr{Comparison with the ControlNet baseline.}
Figure~\ref{fig:camera-tokens-controlnet-comparison} examines an alternative approach where a ControlNet~\cite{zhang2023adding} is conditioned on depth to improve scene preservation for Camera Settings as Tokens. Despite using conditional depth, camera embeddings, and a text prompt, this approach still struggles to maintain scene content. As seen in Figure~\ref{fig:camera-tokens-controlnet-comparison} (right), the baseline preserves the dog's pose but changes its identity and the background scene. In contrast, our method maintains both the subject and background while effectively adjusting the blur.

\xhdr{Qualitative Comparison with Real Images.}
To assess our model’s qualitative performance against real photographs, we compare its outputs with the Everything is Better with Bokeh! (EBB!) dataset~\cite{ignatov2020rendering}.  
This dataset contains pairs of images of the same scene captured at two aperture settings: $f/1.8$ (shallow depth-of-field) and $f/16$ (all-in-focus).  

To generate comparable results without bias toward either aperture, we first caption each $f/16$ image using the InternVL3 model~\cite{zhu2025internvl3}.  
These captions serve as neutral text prompts.  
We then provide the caption along with the target aperture ($f/1.8$ or $f/16$) to our model to synthesize shallow depth-of-field and all-in-focus images, respectively.  
Representative outputs are shown in Figure~\ref{fig:fig12}.

The figure demonstrates that our method faithfully reproduces the expected optical characteristics of each aperture.  
When conditioned on $f/1.8$, our model produces images with pronounced background blur and smooth bokeh, closely matching the shallow-focus ground truth and showing sharp foreground with naturally defocused backgrounds.  
When conditioned on $f/16$, it generates images with crisp details across the full depth of field, consistent with the all-in-focus reference photographs.  

These results confirm that our approach not only captures the semantic content of a scene but also accurately models the physical effects of aperture on defocus, validating the effectiveness of our aperture-aware image generation framework.

\section{Controllability of defocus blur in image generation}

Our model takes EXIF metadata (e.g., aperture, focal length) and a text prompt as input, generating an image that faithfully reflects both. A trained focus distance model predicts the scene’s focus distance during generation, which the lens model uses to apply defocus blur consistent with the metadata.

To enable controllability over the focus in the generated image, users can intercept the predicted focus distance and provide their own focus distance value. The lens model applies spatial blur based on this user-defined focus distance, allowing precise control over where the generated image should focus. The focus distance is represented on the depth output scale of the Metric3Dv2 depth model. For instance, a focus distance of 0.1 corresponds to the depth plane with a value of 0.1 in the depth map. We demonstrate the effects of varying focus distance in Figure~\ref{fig:focus_distance_change}, where low and high focus distance values result in noticeable shifts in the focal plane within the image.

In addition to the prompt, our model provides the ability to manipulate focus distance and aperture, offering fine-grained control over image generation. By leveraging this information, the model determines where and how much defocus blur to apply. This controllability is illustrated in a video attached in the supplementary material, along with several qualitative examples.

\section{Deep and Shallow Depth-of-Field Datasets}
\label{sec:suppl_datasets}
\begin{figure}[t]
    \centering
    \includegraphics[width=3.2in]{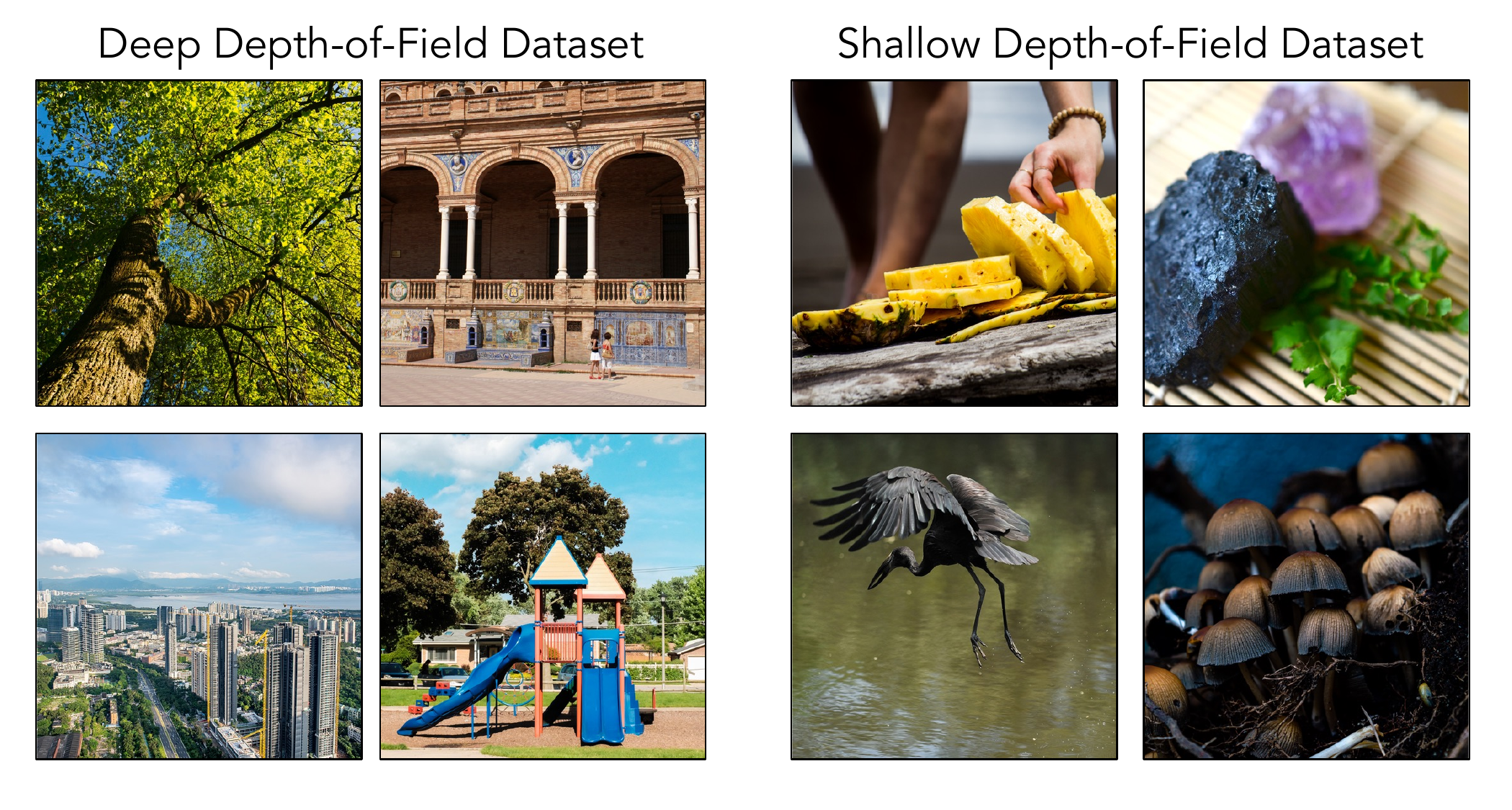}\vspace{-4mm}%
    \caption{{\bf Deep and Shallow DoF Datasets.} Images shown are selected using our dataset filtering approach mentioned in Sec.~\ref{sec:suppl_datasets}. After filtering, the Deep DoF dataset primarily consists of all-in-focus images, while the Shallow DoF dataset includes images with defocus blur, emphasizing a specific object of interest. We use these datasets to train our model.}
    \label{fig:ddof_sdof_dataset}
    \vspace{-2mm}
\end{figure}

Our generator $G$ produces all-in-focus (deep depth-of-field, or
Deep DoF) images $\mathbf{x}$, while the lens model renders shallow DoF
images $\hat{\mathbf{x}}$, trained with only weak supervision ($\mathbf{x}$, $\hat{\mathbf{x}}$ shown in Fig.~2).
To supervise this pipeline, we curate large-scale datasets of deep and shallow DoF images from roughly 300 million uncurated photographs drawn from a commercially available stock-photography dataset. We discard photos with no EXIF data. For photos without captions, we generate captions using BLIP2~\cite{li2023blip}. A ResNeXt–FPN classifier~\cite{Xie2016} is used to identify whether each image contains no blur, desirable blur, or undesirable blur. 

\paragraph{Filtering Criteria.}
From the classifier outputs and EXIF metadata, we apply the following
filters to construct the two datasets:
\begin{itemize}[label=$\bullet$,itemsep=2pt,topsep=5pt,leftmargin=5mm]
\item{\textbf{Aperture range:} Shallow DoF images retain aperture values below 10, producing a narrow depth of field and aesthetically pleasing background blur. Deep DoF images retain aperture values between 10 and 50 to ensure sharp focus across the scene.}

\item{\textbf{Device type:} Smartphone photographs are removed from the shallow DoF set to avoid synthetic blur introduced by computational photography.}

\item{\textbf{Exposure time:} Images with exposure times longer than 0.1 seconds are excluded from the deep DoF set to prevent motion blur.}

\item{\textbf{Photographic validity:} We discard non-photographic content (e.g., AI-generated or illustrated images) by verifying that each candidate is a real photograph using the vision–language model InternVL~\cite{chen2023internvl}.}
  
\item{\textbf{Blur classifier output:} Shallow DoF images are those labeled as exhibiting the desired blur, while deep DoF images are those labeled as having no blur.}

\end{itemize}

\paragraph{Dataset Scale.}
Applying these criteria gives us roughly 1.5 million
(image, EXIF, prompt) pairs for each of the Deep DoF and Shallow DoF
datasets. Representative samples are shown in Figure~\ref{fig:ddof_sdof_dataset}.

\begin{figure*}[t]
    \centering
    \includegraphics[width=6.9in]{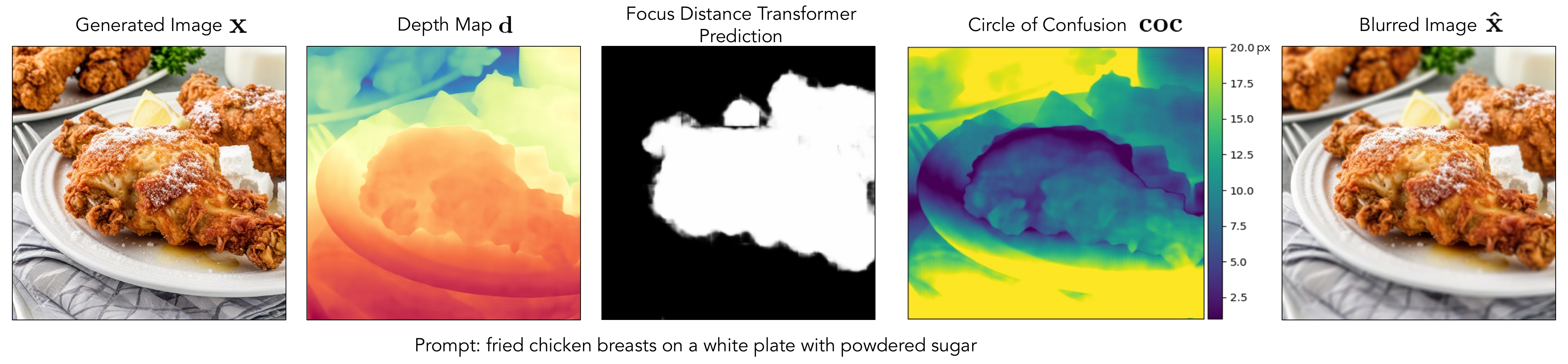}\vspace{-2mm}%
    \caption{{\bf Image Generation Pipeline.} The pipeline begins with an image generated by the model (left), followed by depth prediction from the depth model. A saliency map is then predicted and used to compute the focus distance as a weighted sum of depth and saliency. The lens model calculates the circle of confusion (CoC) based on depth, focus distance, and other EXIF parameters. Finally, a spatially varying blur kernel, derived from the CoC, is applied to the generated image. The entire pipeline is trained end-to-end.}
    \label{fig:image_pipeline}
    \vspace{-2mm}
\end{figure*}

\section{Human Study}
\label{sec:human_study}

We conducted a human study to validate our method, aiming to evaluate whether the model can preserve the scene and reduce defocus blur when the aperture value in the camera metadata increases. For the study, we used 25 prompts from the validation split of the deep and shallow depth-of-field datasets we created. For each prompt, we generated images corresponding to aperture values in the set [1.8, 2.8, 4, 5.6, 8, 11, 16, 22] across all methods.

The study involved six baseline methods in addition to our approach:
\begin{itemize}[label=$\bullet$,itemsep=2pt,topsep=5pt,leftmargin=5mm]
\item{SDXL (Table 2, Row 2),}
\item{4-step SDXL (Distilled) (Table 2, Row 4),}
\item{Camera Settings as Tokens (Table 2, Row 5),}
\item{SDXL (EXIF-Fixed) + Dr.Bokeh Lens (Table 2, Row 8),}
\item{SDXL + TAF Lens (Table 2, Row 6),}
\item{Deep-DoF Gen + TAF Lens (Table 2, Row 8).}
\end{itemize}

We created a video for each method per prompt, sequentially increasing the aperture value to illustrate its effect in the video. During the study, participants were shown paired videos—one generated by our model and the other by a baseline—for the same prompt. Participants were instructed to select the video that better preserved scene content, reduced blur as aperture increased, and kept the salient object in focus.

Each participant answered 20 comparison questions, with video pairs randomly assigned. The study involved 25 participants, and their aggregated preferences are presented in Figure~\ref{fig:human_study}. Results show that users preferred our method over the baselines in over at least 83\% of cases, consistent with the performance metrics in Table 2, further demonstrating our method’s superiority over baselines.

The user study was conducted on the Hugging Face Spaces~\cite{huggingfacepspace} platform, and the interface used is shown in Figure~\ref{fig:human_study_interface}.

\section{Implementation Details}
Here, we provide more information about training, hyperparameters, and evaluation metrics.

\subsection{Training and Hyperparameters}
We train the few-step generator by distilling it from the SDXL model~\cite{podell2023sdxl} over 4 steps.
We train our network on 2 nodes, each equipped with 8 A100 GPUs, for a total of 16 GPUs. 
The Deep DoF generator is trained for one day, followed by an additional day of fine-tuning using the full setup, which includes the lens model, depth estimation module, and focus distance predictor. To scale the training efficiently across 8 nodes, we use the Fully-Shared Data Parallel framework~\cite{zhao2023pytorch}. 

The training images have a resolution of 1024 $\times$ 1024, and the model is optimized using the AdamW optimizer with a learning rate of $5 \times 10^{-7}$, a weight decay of 0.01, and beta parameters of (0.9, 0.999). The batch size is set to 1 to fit the entire model in GPU memory. The fake diffusion model $\mu_{\text{fake}}$ is updated 5 times for each generator update and during generator updates, we alternate between Shallow and Deep DoF. The focus distance model (optimized with $L_{\text{Huber}}$) is updated at every iteration. The guidance scale for the real diffusion model $\mu_{\text{real}}$ is to be 8. The loss weights are set to  $\lambda_1=1, \lambda_2=1, \lambda_3=200$.

\subsection{Evaluation Metrics}
\label{sec:evaluation-metrics}
\xhdr{Content Consistency.} To evaluate this metric, we compute the segmentation maps using Semantic Segment Anything~\cite{chen2023semantic}. This is an open-set segmentation method which means it does not have a predefined set of prediction classes. Due to this, sometimes the top-1 predicted class could be different for the same object. So, we compare the top-3 predicted classes. To check if the semantic class remains the same, we check if any of top-3 classes matches remain the same for the image pixels instead of comparing just top-1.

\xhdr{Blur Monotonicity.} We introduced Blur Monotonicity to quantify whether image blur decreases as the aperture value increases.   To measure the efficacy of this metric, we use the Everything is Better with Bokeh! dataset~\cite{ignatov2020rendering},
which provides approximately 5{,}000 pairs of all-in-focus images captured at $f/16$ and corresponding shallow-depth images at $f/1.8$. In 96\% of the pairs, the signal energy of the all-in-focus image exceeds that of its bokeh counterpart, supporting the premise of our metric.  Visual inspection of the remaining 4\% reveals negligible defocus differences, making the energy comparison less informative in those specific cases.

We now present a theoretical justification for the validity of our metric. As a reminder, we defined the signal energy $E(\cdot)$ as the sum of squared magnitudes of the 2D Fourier spectrum, computed as $\sum_{\vec{k}} \left| \text{FFT2}(\cdot)_{\vec{k}} \right|^2$, where the sum is over all frequencies $\vec{k}$. For a simple scene of uniform depth (i.e., depth is constant across pixels), we show that the energy of an image formed from that scene is greater than the energy of the image after blurring via convolution with a blur kernel.

\begin{theorem}
Let $f, h$ be $d$-dimensional tensors with $f, h \in \mathbb{R}^{N_1 \times N_2 \times \cdots \times N_d}$, where $\vec{N} = (N_1, N_2, \ldots, N_d)$. Define the discrete Fourier transform (DFT) of $f$ as

\begin{equation}
F_{\vec{k}} = \sum_{\vec{n}=\vec{0}}^{\vec{N}-1} f_{\vec{n}} \, e^{-2\pi i \, \vec{k} \cdot \left( \frac{\vec{n}}{\vec{N}}\right) },
\end{equation}

where division is element-wise. Define $H_{\vec{k}}$ analogously for $h$. The multi-index summation is defined as
\[
\sum_{\vec{n}=\vec{0}}^{\vec{N}-1} := \sum_{n_1=0}^{N_1 - 1} \sum_{n_2=0}^{N_2 - 1} \cdots \sum_{n_d=0}^{N_d - 1} = \sum_{\vec{n} \in \{0, \ldots, N_1 - 1\} \times \cdots \times \{0, \ldots, N_d - 1\}}.
\]

Assume $h_{\vec{n}}\ge 0$ for all $\vec{n}\in\{0,\ldots,\vec{N}-1\}$ and
$\sum_{\vec{n}=\vec{0}}^{\vec{N}-1} h_{\vec{n}} = 1$. 
If $g = f \ast h$ and $G_{\vec{k}}$ is the DFT of $g$, then:

\begin{equation}
\label{eqn:energy_decreases}
\sum_{\vec{k}=\vec{0}}^{\vec{N}-1} \left| G_{\vec{k}} \right|^{2} = \sum_{\vec{k}=\vec{0}}^{\vec{N}-1} \left| F_{\vec{k}} \, H_{\vec{k}} \right|^{2} \leq \sum_{\vec{k}=\vec{0}}^{\vec{N}-1} \left| F_{\vec{k}} \right|^{2}.
\end{equation}

\end{theorem}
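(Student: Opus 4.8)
The plan is to reduce the statement to two ingredients: the multidimensional convolution theorem for the DFT, and the observation that the kernel $h$, being a probability distribution, has a frequency response bounded by one in magnitude. The first equality in \eqref{eqn:energy_decreases} is then immediate, and the inequality follows pointwise in frequency before summing.

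First I would establish the convolution theorem in the multi-index setting, namely $G_{\vec{k}} = F_{\vec{k}}\,H_{\vec{k}}$ for every frequency $\vec{k}$. This is the standard fact that (circular) convolution in the spatial domain corresponds to pointwise multiplication in the DFT domain, and it extends to $d$-dimensional tensors simply by applying the one-dimensional argument along each axis, since the kernel $e^{-2\pi i\,\vec{k}\cdot(\vec{n}/\vec{N})}$ factors as a product over the coordinates. Substituting this identity gives the first equality $\sum_{\vec{k}} |G_{\vec{k}}|^2 = \sum_{\vec{k}} |F_{\vec{k}}\,H_{\vec{k}}|^2$ directly.

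Next I would prove the key bound $|H_{\vec{k}}| \le 1$ for all $\vec{k}$. Starting from the definition of $H_{\vec{k}}$ as a weighted sum of unit-modulus exponentials, the triangle inequality yields $|H_{\vec{k}}| \le \sum_{\vec{n}} h_{\vec{n}}\,\bigl|e^{-2\pi i\,\vec{k}\cdot(\vec{n}/\vec{N})}\bigr|$. Here the two hypotheses on the kernel do the work: nonnegativity $h_{\vec{n}} \ge 0$ lets me drop the absolute values on the weights, and the fact that $\sum_{\vec{n}} h_{\vec{n}} = 1$ together with $|e^{i\theta}| = 1$ collapses the right-hand side to exactly $1$. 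Hence $|H_{\vec{k}}|^2 \le 1$ everywhere in frequency.

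Finally I would combine the two facts. Writing $\sum_{\vec{k}} |F_{\vec{k}}\,H_{\vec{k}}|^2 = \sum_{\vec{k}} |F_{\vec{k}}|^2\,|H_{\vec{k}}|^2$ and applying $|H_{\vec{k}}|^2 \le 1$ termwise bounds this by $\sum_{\vec{k}} |F_{\vec{k}}|^2$, which is the desired inequality. I do not expect a genuine obstacle here; the only point requiring care is the choice of convolution convention, since the multiplicative DFT identity holds for circular convolution, so I would state that $\ast$ is taken in the circular (periodic) sense consistent with the finite DFT. Equality holds precisely when $|F_{\vec{k}}| = 0$ at every frequency where $|H_{\vec{k}}| < 1$, i.e.\ when the blur kernel leaves the present frequency content untouched, which matches the intuition that a genuinely defocusing kernel strictly decreases signal energy.
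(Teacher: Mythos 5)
Your proposal is correct and follows essentially the same route as the paper's proof: the convolution theorem gives the equality, the triangle inequality combined with $h_{\vec{n}}\ge 0$ and $\sum_{\vec{n}} h_{\vec{n}}=1$ gives $|H_{\vec{k}}|\le 1$, and the inequality then follows termwise over frequencies. Your explicit remark that $\ast$ must be interpreted as circular convolution for the DFT identity to hold exactly is a point the paper leaves implicit, and your equality characterization matches the paper's strict-inequality remark.
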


\begin{proof}
The equality in the above equation follows from the convolution theorem, which states that $G_{\vec{k}} = F_{\vec{k}} \, H_{\vec{k}}$. Now we can analyze the inequality. For each frequency $\vec{k}$,
\begin{align}
H_{\vec{k}} &= \sum_{\vec{n}=\vec{0}}^{\vec{N}-1} h_{\vec{n}} \, e^{-2\pi i \, \vec{k} \cdot \left( \frac{\vec{n}}{\vec{N}}\right) }
\\
\left| H_{\vec{k}} \right| &\leq \sum_{\vec{n}=\vec{0}}^{\vec{N}-1} \left| h_{\vec{n}} \right| \left| e^{-2\pi i \, \vec{k} \cdot \left( \frac{\vec{n}}{\vec{N}}\right)} \right| = \sum_{\vec{n}=\vec{0}}^{\vec{N}-1} \left| h_{\vec{n}} \right| = 1
\end{align}
Since $\left|e^{i\theta}\right|=1$ for all $\theta\in\mathbb{R}$. We want to show that 
\begin{equation}
0 \geq \sum_{\vec{k}=\vec{0}}^{\vec{N}-1} \left| F_{\vec{k}} \right|^{2} \left( \left| H_{\vec{k}} \right|^{2} - 1 \right).
\end{equation}
Now $\forall \vec{k}, \left| F_{\vec{k}} \right|^{2} \geq 0$ and $\left| H_{\vec{k}} \right|^{2} - 1 \leq 0$, so 
\begin{equation}
\sum_{\vec{k}=\vec{0}}^{\vec{N}-1} \left| F_{\vec{k}} \right|^{2} \left( \left| H_{\vec{k}} \right|^{2} - 1 \right) \leq 0.
\end{equation}
Further, the inequality in Eq.~\ref{eqn:energy_decreases} is strict if there exists some $\vec{k} \; \text{such that} \left| F_{\vec{k}} \right| > 0 \; \text{and} \left| H_{\vec{k}} \right| < 1.$
\end{proof}

\begin{application}
Assume the Circle of Confusion (CoC) is not spatially varying (i.e., the scene has uniform depth) and let $f$ be the image and $h$ the blur kernel. 
By the theorem, the signal energy satisfies
\[
E(h \ast f)
    \le E(f).
\]

Further, the inequality is strict if there exists some $\vec{k} \; \text{such that} \left| F_{\vec{k}} \right| > 0 \; \text{and} \left| H_{\vec{k}} \right| < 1$. Assume the image is formed from the scene by a process that adds i.i.d.\ Gaussian noise to each pixel. Then, almost surely, $\left| F_{\vec{k}} \right| > 0$ for all $\vec{k}$, since the DFT is a linear transformation and each DFT coefficient is the sum of a deterministic component and a Gaussian-distributed random variable. Thus, it suffices to analyze the kernel $h$ and determine whether there exists some $\vec{k}$ with $\left| H_{\vec{k}} \right| < 1$. In particular, any blur kernel $h$ that is non-negative, sums to 1, and is not a delta function (i.e. has at least two non-zero entries) guarantees a strict inequality. This includes, as special cases, disc-shaped and polygonal bokeh corresponding to blur kernels with such shapes. We emphasize again that since this analysis uses the convolution theorem, it applies only to the simplified setting of a scene with uniform depth.

\end{application}

Although the theoretical guarantee holds under the assumption of uniform depth and spatially invariant blur, our empirical results on real-world images with spatially varying blur strongly suggest that the blur monotonicity metric remains a reliable indicator of relative blur. This combination of theory and empirical validation supports the practical utility of our metric.

\subsection{4-step SDXL (distilled) with EXIF}
\label{sec:4-step-sdxl}
To incorporate camera metadata into the distilled 4-step SDXL generator, we design an EXIF projection module that encodes numerical tags --- specifically, Aperture and Focal Length. These values are first transformed using sinusoidal positional embeddings, then concatenated and passed through two projection layers to produce a single EXIF embedding, which is added to the diffusion timestep embedding.

\begin{figure*}[htbp] 
    \centering
    \vspace{5pt}
    \includegraphics[width=0.9\linewidth]{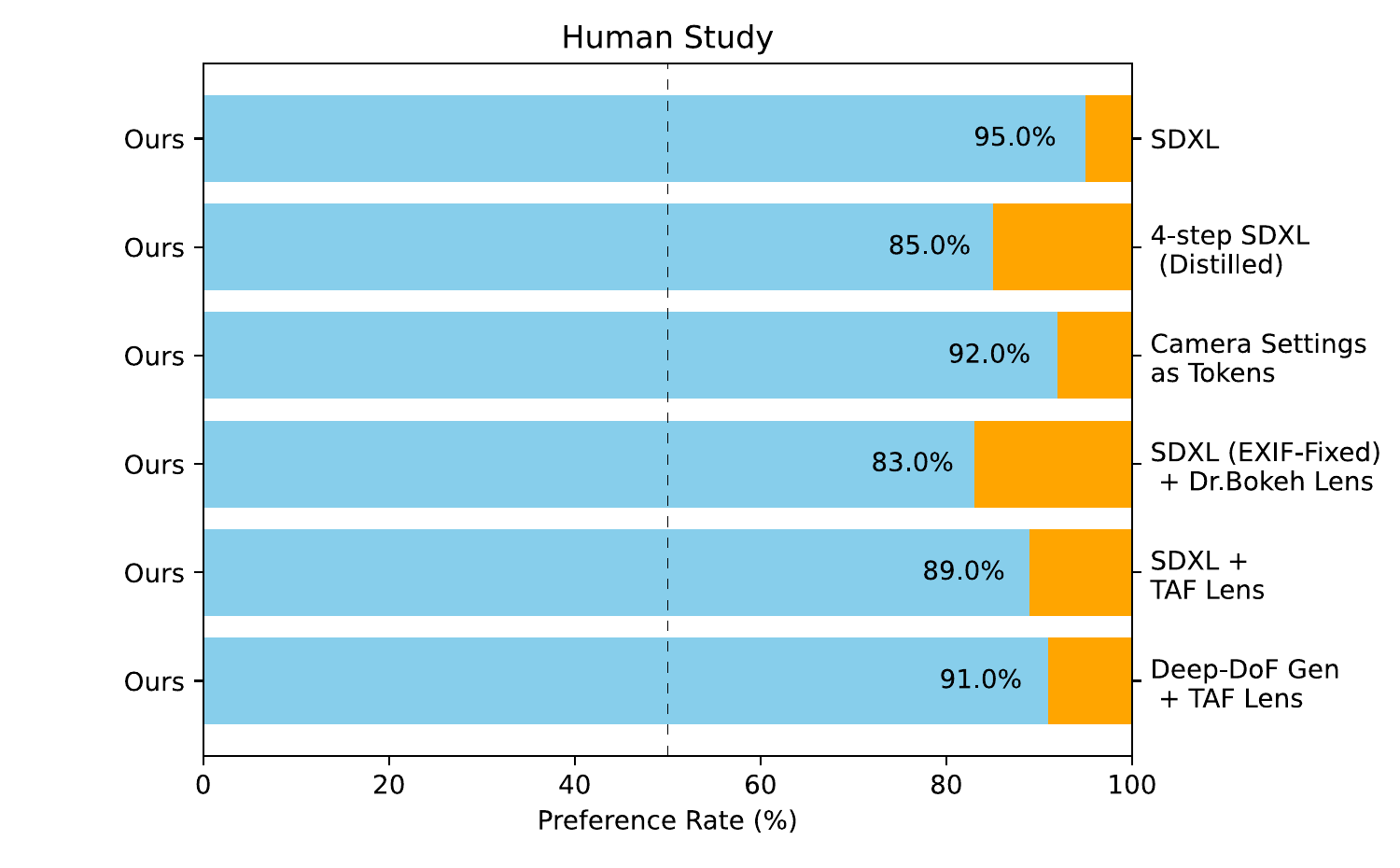}%
    \vspace{5pt}
    \caption{{\bf Human Studies.} Preference rate for selecting our method over the baselines (Section ~\ref{sec:human_study}).}
    \label{fig:human_study}
\end{figure*}

\begin{figure*}[t]
    \centering
    \includegraphics[width=0.9\linewidth]
    {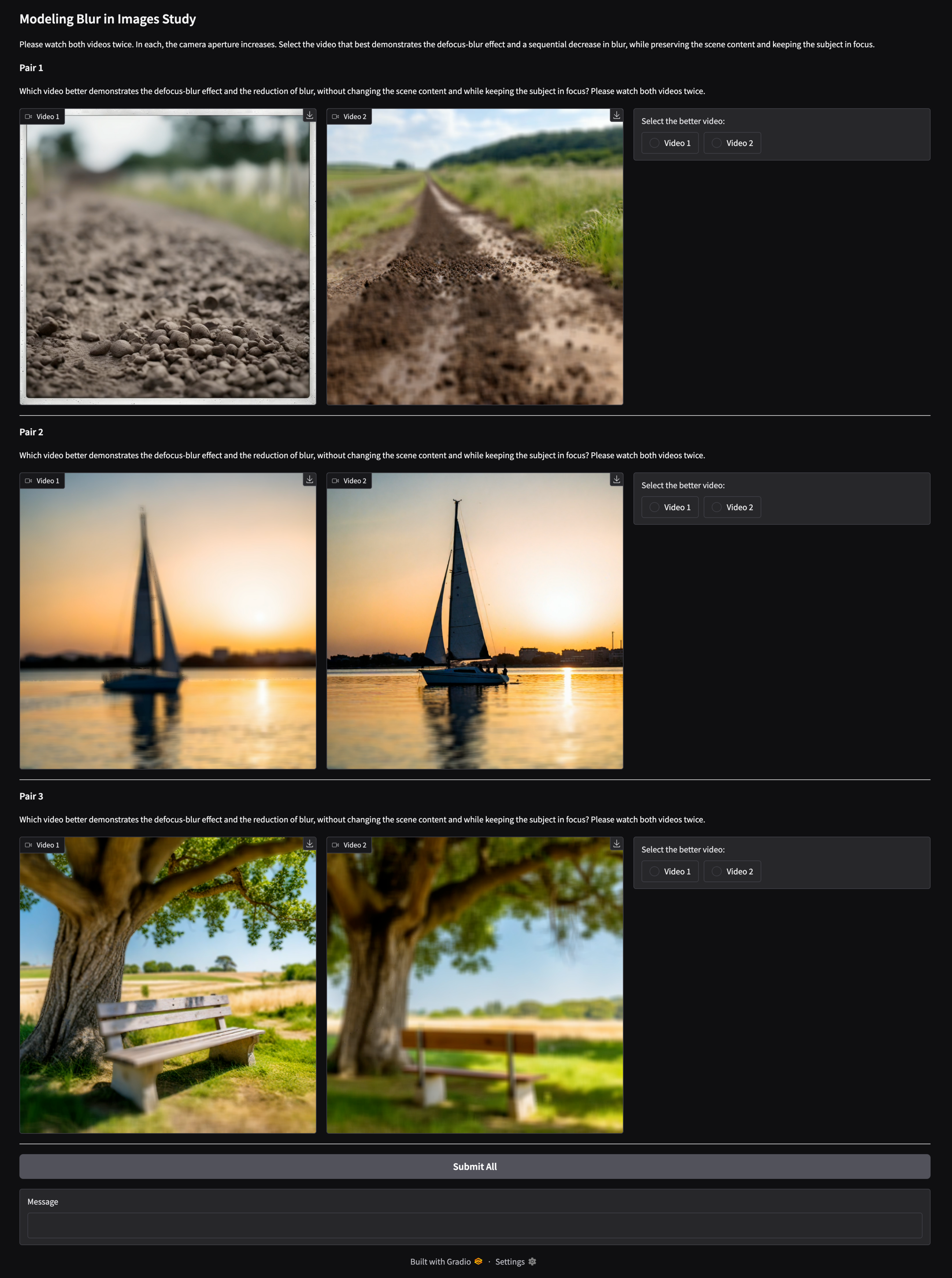}\
    \vspace{-2mm}
    \caption{{\bf Human study interface.} We show the Hugging Face Spaces interface used to conduct the user studies. In each question, one video is generated by our method, while the other is randomly selected from one of the baselines for the same prompt. The participants are tasked with selecting the video that shows the realistic defocus-effect and the least amount of scene content change with decrease in blur as the aperture increases in the video. }
    \label{fig:human_study_interface}
    \vspace{-5mm}
\end{figure*}

\end{document}